\newtheorem{theorem}{Theorem}
\newtheorem{proof}{Proof}
\title{Contrastive Continual Learning with Importance Sampling and \\ Prototype-Instance Relation Distillation}
\author{
    Jiyong Li\textsuperscript{\rm 1, \rm 2}, Dilshod Azizov\textsuperscript{\rm 3}, Yang Li\textsuperscript{\rm 4, \rm 5}, Shangsong Liang\textsuperscript{\rm 1, \rm 2, \rm 3}\thanks{Corresponding author.}
}
\begin{document}

\maketitle

\begin{abstract}
Recently, because of the high-quality representations of contrastive learning methods, rehearsal-based contrastive continual learning has been proposed to explore how to continually learn transferable representation embeddings to avoid the catastrophic forgetting issue in traditional continual settings. Based on this framework, we propose \emph{\textbf{C}ontrastive \textbf{C}ontinual \textbf{L}earning via \textbf{I}mportance \textbf{S}ampling (CCLIS)} to preserve knowledge by recovering previous data distributions with a new strategy for \emph{\textbf{R}eplay \textbf{B}uffer \textbf{S}election (RBS)}, which minimize estimated variance to save hard negative samples for representation learning with high quality.  Furthermore, we present the \emph{\textbf{P}rototype-instance \textbf{R}elation \textbf{D}istillation (PRD)} loss, a technique designed to maintain the relationship between prototypes and sample representations using a self-distillation process. Experiments on standard continual learning benchmarks reveal that our method notably outperforms existing baselines in terms of knowledge preservation and thereby effectively counteracts catastrophic forgetting in online contexts. The code is available at https://github.com/lijy373/CCLIS.
\end{abstract}

\section{Introduction}

Deep neural networks are widely recognized to commonly face challenges in preserving performance in learned tasks after being trained on a new one, which is called Catastrophic Forgetting~\cite{mccloskey1989catastrophic}.
Continual learning has been proposed to overcome Catastrophic Forgetting and transfer knowledge forward and backward during training for a stream of data, with only a small portion of the samples available at once. Recent research has integrated this scheme into continual learning due to the capability of contrastive learning to secure high-quality representations~\cite{chen2020simple}. A prime example of this integration is contrastive continual learning~\cite{cha2021co2l}, which leverages the advantages of contrastive representations. Furthermore, we have identified two primary areas that remain underexplored in contrastive continual learning. First, the settings of continual learning inherently mean that only limited samples from previous tasks are retained and used alongside current tasks in contrastive continual models. This introduces a bias between contrastive representations trained in on-line and off-line settings, significantly contributing to Catastrophic Forgetting. Second, while the importance of hard negative samples (i.e., data points that are difficult to distinguish from anchors of different classes) for contrastive representation is recognized~\cite{robinson2020contrastive}, limited research has been done on the optimal selection and preservation of these samples in the context of contrastive continual learning, as observed in previous research~\cite{cha2021co2l}. In this paper, we address two questions: (1) \emph{How can we more effectively recapture the data distributions of earlier tasks using buffered samples?} and (2) \emph{How can we improve the selection and retention of important samples?}

Consequently, we propose \emph{ CCLIS}, a contrastive continual learning algorithm that can alleviate Catastrophic Forgetting and select hard negative samples. We set out to make theoretical and empirical contributions as follows: 
\begin{enumerate}
\item Based on a prototype-based supervised contrastive continual learning framework, we estimate previous task distributions via importance sampling with a weighted buffer during training, which can eliminate the bias between contrastive representations trained online and offline to overcome Catastrophic Forgetting.

\item Based on (1), we introduce a methodology to compute importance weights within a contrastive continual learning framework accompanied by robust theoretical justifications. In addition, we examine the interplay between importance sampling and hard negative selection. In particular, we observe that samples with higher scores, as determined by our method, are more likely to be retained as hard negative samples.

\item We propose the \emph{PRD} loss to preserve the learned relationship between prototype and instance, which helps to use the importance sampling method to recover the distributions of tasks and maintain knowledge of previous tasks. 

\item Finally, experiments demonstrate that our method outperforms most of the state-of-the-art non-stationary task distributions from three benchmark datasets. Empirically, our algorithms can recover the data distributions of previous tasks as much as possible and store hard negative samples to enhance the performance of contrastive continual learning, which helps mitigate Catastrophic Forgetting.

\end{enumerate}

\begin{figure*}[!t]
\centering
\subfigure[Contrastive Learning via Importance Sampling]{
\label{fig:CCLIS}
\includegraphics[width=.5\textwidth]{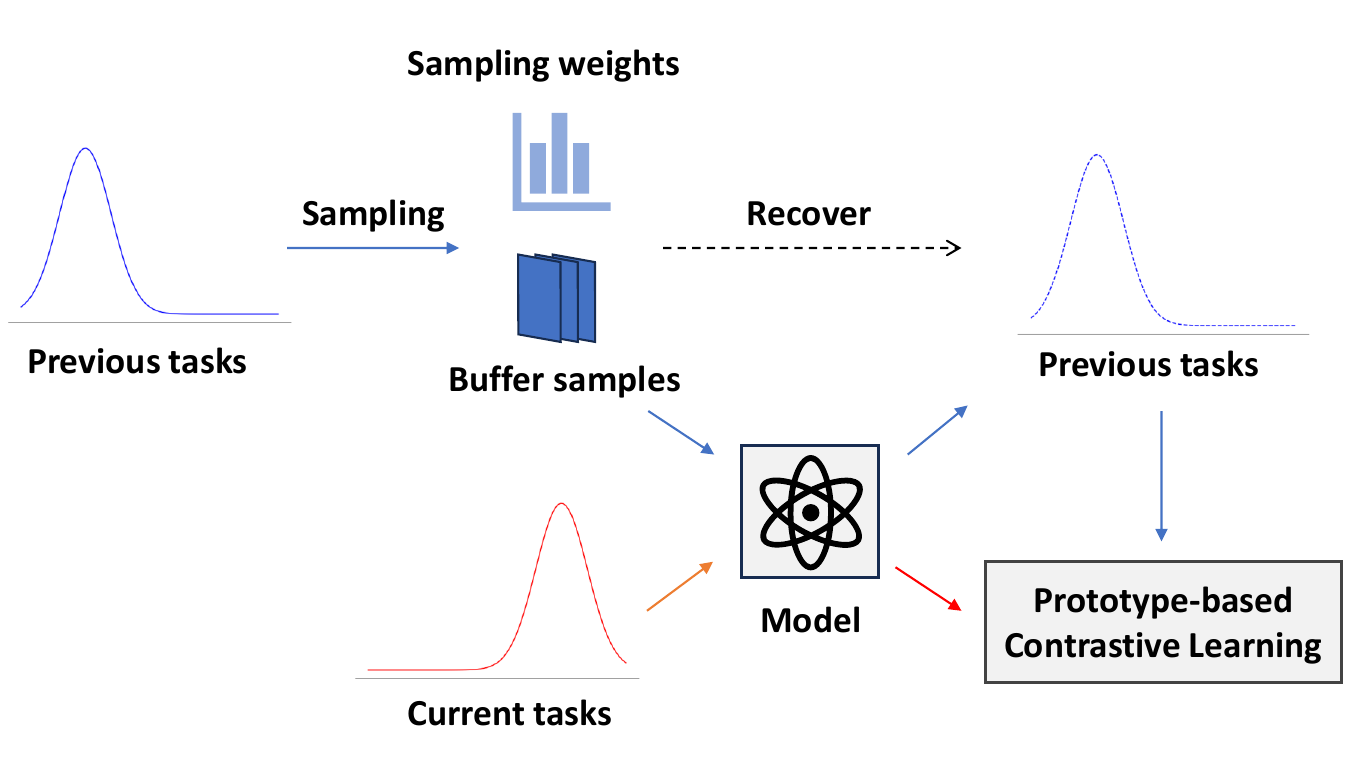}}
\subfigure[\footnotesize{Prototype-instance} Relation Distillation Loss]{\label{fig:PRD}
\includegraphics[width=.43\textwidth,]{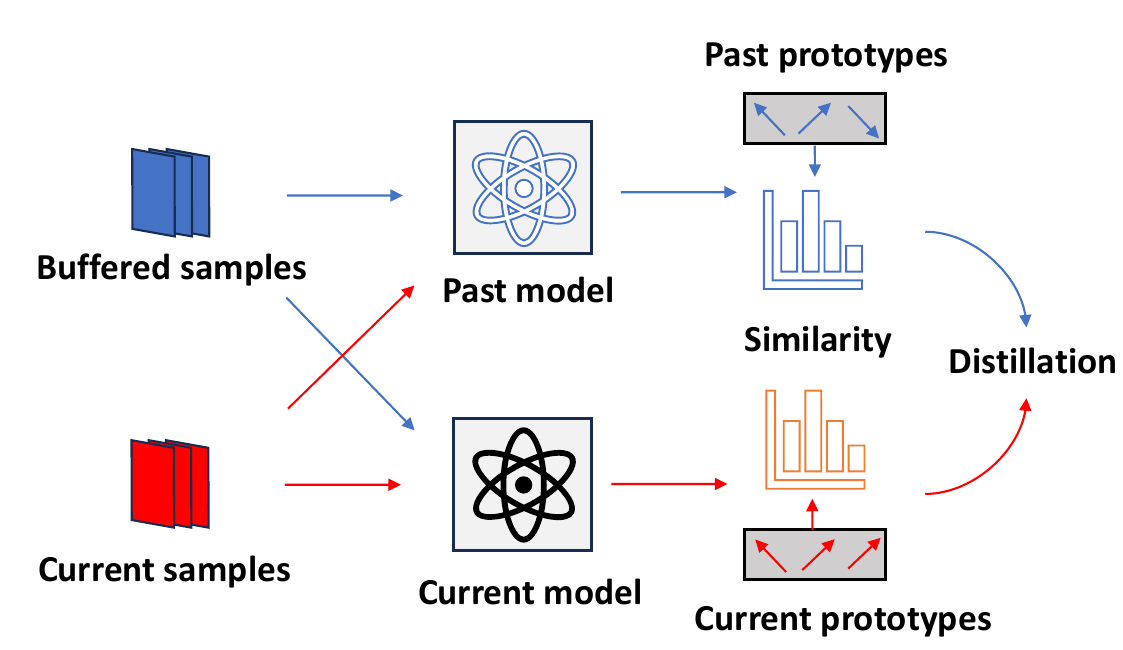}}

\caption{Illustration of Contrastive Learning via Importance Sampling and PRD Loss. (a) When new tasks are introduced, buffer samples are drawn with specific sampling weights. By using the Importance Sampling technique, we approximately recover the data distributions of previous tasks and apply prototype-based contrastive learning on previous and current data to have high-quality features.  (b) Given samples of a mini-batch, the PRD loss is designed to distill the relation between prototypes and instances from the previous model to the current one. We minimize the cross-entropy of prototype-instance similarity from the current and previous models with frozen parameters, which are computed with dot products of normalized embeddings.}
\label{fig:Overview}

\end{figure*}

\section{Related Work}

\textbf{Contrastive Continual learning.} 
Continual learning aims to extend the previous knowledge to new task adaptation and overcome Catastrophic Forgetting with restricted buffer and computing resources for the data stream, which has been widely used in Computer Vision, Reinforcement Learning etc~\cite{wang2023comprehensive}. Continual learning algorithms are divided into three categories~\cite{de2021continual}: Replay methods~\cite{riemer2018learning,lopez2017gradient,aljundi2019gradient}, Regularization-based methods~\cite{li2017learning,zhang2021variational} and Parameter isolation methods~\cite{rusu2016progressive}. Recently, replay methods have shown impressive performance, where the idea of extending contrastive learning~\cite{jaiswal2020survey, khosla2020supervised} schemes to continual learning settings is noteworthy~\cite{cha2021co2l, liang2019collaborative, liang2018dynamic, liang2021cross, 
de2021continual}. However, due to the lack of previous task samples caused by continual learning, there is a gap in representations between continual contrastive learning and off-line authentic contrastive learning. To fill this gap, recent work proposes continual contrastive learning \cite{fini2022self} and contrastive continual learning \cite{cha2021co2l}. The first aims to explore the performance of contrastive learning in a continuous learning setting, which is a different research goal from this paper. The latter adopts contrastive learning as a method to avoid suffering from Catastrophic Forgetting, which is the topic our work mainly focuses on. In this study, with the help of contrastive continual learning, we use the importance sampling method to recover the data distribution of previous tasks to alleviate Catastrophic Forgetting to some extent.

\textbf{Coreset selection for Replay-based Continual Learning.} Our algorithm focuses primarily on the replay method. Coreset selection for replay-based continual learning has been studied, and some buffer selection criteria have been proposed, for example, herding-based exemplar selection \cite{rebuffi2017icarl}, gradient-based method~\cite{aljundi2019gradient}, active learning~\cite{ayub2022few}, mini-batch similarity, sample diversity and coreset affinity \cite{yoon2021online} etc. Recent work \cite{tiwari2022gcr} proposes weighted buffer selection, which can increase the capacity to preserve previous knowledge.
Although coreset selection has been proposed in continual learning and contrastive learning~\cite{robinson2020contrastive}, how to select and preserve important samples to help contrastive continual learning has not yet been studied.
With 
contrastive continual learning,  
we apply the importance sampling method to weighted buffer, which assists mine hard negatives, estimate data distributions of previous tasks and overcome Catastrophic Forgetting issue.

\textbf{Knowledge Distillation.} As a vital method to transfer knowledge from teacher models to student models, knowledge distillation~\cite{hinton2015distilling} has been widely applied to continual learning in order to alleviate Catastrophic Forgetting by distilling past features into current models. Inspired by~\cite{cha2021co2l, fang2021seed, asadi2023prototype}, we propose to use knowledge distillation to keep the relation between prototypes and instances stable, which helps to apply the importance sampling strategy to better recover previous task distributions.

\section{Background}
\subsection{Problem Setup: Continual Learning}
\label{sec:continual}

We aim at supervised 
continual learning: given sequentially arriving tasks with timesteps $t \in \{1,2,\dots, T\}$, which are assumed to be sampled from a non-stationary distribution. In each task $t$, there are $N_t$ input-label pairs drawn from the data distributions of the specific task, i.e., ${(x_i, y_i)}_{i=1}^{N_t}\sim \mathcal{D}_t$, where $x_i$ denotes the input and $y_i$ denotes the responding class label. At each time step $t$, the model receives current task samples $D_t$ separate from past task data $D_{1:t-1}$ due to the non-stationary distribution from which they are sampled.

We focus on two primary tasks in the setting of continual learning~\cite{van2019three}:  
\emph{Class-incremental learning (Class-IL) and Task-incremental learning (Task-IL).} For the first, given test samples, the model predicts the responding labels without task information during evaluation; that is, the goal is to obtain the model $\phi_\theta(x)$ by optimizing the following objective:

\begin{equation}
\small
     \min\sum_{t=1}^T\mathbb{E}_{(x,y)\sim D_t}[l(y, \phi_\theta(x))],
    \label{Class-IL}
\end{equation}

\noindent where $l$ denotes the loss function such as the softmax function and $\theta$ denotes the model parameters. For the latter, the model is trained on sequentially arriving tasks with clear task boundaries and evaluated with task information:
\begin{equation}
\small
     \min\sum_{t=1}^T\mathbb{E}_{(x,y)\sim D_t}[l(y,\phi_\theta(x,t))].
    \label{Task-IL}
\end{equation}

\subsection{Preliminaries: Contrastive Learning}

Given a training dataset $\mathcal{X}=\{x_1,x_2,\dots,x_N\}$ of $N$ samples, contrastive learning aims to obtain an embedding function $f_\theta$ mapping $\mathcal{X}$ to representation embeddings $\mathcal{Z} = \{z_1, z_2,\dots, z_N\}$, i.e., $z_i = f_\theta(x_i)$, which can be applied to downstream tasks. Instance-wise contrastive learning \cite{he2020momentum} augments training samples into two views to have $2N$ inputs and gains high-quality embedding space by minimizing a contrastive loss function such as InfoNCE \cite{oord2018representation}:
\begin{equation}
\small
    L_{\mathrm{InfoNCE}}(\theta) := \sum_{i=1}^{2N} -\log\frac{\exp(z_i\cdot z_i'/\tau)}{\sum_{j\neq i}\exp(z_i\cdot z_j/\tau)},
    \label{eq: InfoNCE}
\end{equation}
 where $z_i$ and $z_i'$ are positive embedding pairs of instance $x_i$ with regards to the two views, while $z_j$ includes one positive embedding $z_i'$ and $2N-2$ negative embeddings of other instances, and $\tau$ denotes temperature.

To better apply our proposed method, we replace $z_i$ with the prototype $c$ of the responding class in equation~\eqref{eq: InfoNCE} without amplification of samples into two views, which is different from the previous prototype-based contrastive learning framework~\cite{li2020prototypical, caron2020unsupervised}.

\subsection{Preliminaries: Importance Sampling}
As a significant method in machine learning, importance sampling~\cite{kong1994sequential} has been proposed to approximate expectations without drawing samples from target distributions. Formally, given the data distribution $\pi$ of sample $z$ and the proposal distribution $q$, we can estimate the expectation of function $f(z)$ on $\pi$ using the classical Monte Carlo Importance Sampling method:
\begin{equation}
\small
    \mathbb{E}_{z\sim \pi}f(z) = \mathbb{E}_{z\sim q}\frac{\pi(z)}{q(z)}f(z) \approx \frac{1}{L}\sum_{l=1}^L \frac{\pi(z_l)}{q(z_l)} f(z_l),
\end{equation}
where $L$ is the number of drawn samples. However, it is often the case that assuming $\pi = \hat{\pi}/Z_\pi$, the distribution $\pi$ can only be evaluated up to an incremental normalization constant $Z_\pi$. By estimating $Z_\pi$, the biased importance sampling strategy provides the following approximation of the expectation:
\begin{equation}
\small
    \mathbb{E}_{z\sim \pi}f(z) \approx \frac{1}{L}\sum_{l=1}^L \frac{\hat{\pi}(z_l)/q(z_l)}{\hat{Z}_\pi} f(z_l),
\end{equation}
where $\hat{Z}_\pi:=\frac{1}{L}\sum_{l=1}^L \hat{\pi}(z_l)/q(z_l)$ is the estimator of  $Z_\pi$.

\section{Contrastive Continual Learning via Importance Sampling}
\subsection{Overview of Our Model}
Based on previous contrastive continual learning~\cite{cha2021co2l}, we propose \emph{CCLIS} in order to overcome Catastrophic Forgetting. 
We propose a model structured in three distinct steps: First, we build the supervised version of prototype-based InfoNCE via importance sampling in the setting of the continual learning framework. Second, based on our modified version of contrastive loss, we propose a sampling method for \emph{RBS} with sufficient theoretical guarantees. Finally, we introduce a distillation loss in our model. By doing so, our model boasts several significant advantages: (1) With importance sampling under the prototype-based InfoNCE, we can recover distributions of previous tasks to overcome Catastrophic Forgetting. (2) With our \emph{RBS}, we can draw hard negative samples to preserve and eliminate the estimated variation in importance sampling. (3) By using the \emph{PRD} loss, we can maintain the relationship between prototypes and instances, improving the performance of the importance sampling method. In addition, it helps to distill and preserve knowledge from previous tasks. The overview of the proposed model is presented in Algorithm~\ref{alg1}.

\subsection{Prototype-based InfoNCE Loss via Importance Sampling}

In this section, our aim is to adopt a supervised version of the prototype-based InfoNCE to learn high-quality contrastive representation in a continual learning setting. However, due to the gap between offline and online scenarios, only a few samples of previous tasks are preserved in the replay buffer $M$, making it infeasible to optimize the loss function. To fill this gap, we aim to keep the performance of our model close to that trained on data drawn from the current task $D_t$ and the samples available at the moment $t-1$, that is, $R_{t-1}:=M_{t-1}\cup D_{t-1}$, as it is difficult to achieve performance learning samples from all tasks $D_{1:t}$.
Formally, for each task $t$, $\hat{C}_t$ and $Y_t$ are defined as the class set of $R_{t}$ and $D_t$, respectively. Assuming that dataset $\mathcal{D}$ has $N$ samples with $K$ classes, we denote prototypes $\{c_1,\dots,c_K\}$ as clustering centers of classes that are randomly initialized. By denoting $s_{ij}:=c_i\cdot z_j/\tau$ for convenience, we focus on the following objective:
\begin{equation}
    \small
    L:=\sum_{i\sim \hat{C}_{t-1} \cup Y_t}\sum_{j\sim S_i}-\log\frac{\exp(s_{ij})}{\sum_{k\sim R_{t-1}\cup D_t}\exp(s_{ik})}
   \label{Sample-ProtoInfoNCE}
\end{equation}
where $S_i:=\{j|y_j=i\; \text{and}\; (x_j,y_j)\in R_{t-1}\cup D_t\}$ denotes samples from specific class $i$ in currently available data. To analyze the contrastive loss, we deviate from it with respect to the model parameters and obtain the gradients of this loss function. 
For a specific prototype $c_i$ of the previous task, we denote $p_{ij}=\frac{\exp(s_{ij})}{\sum_{k\sim R_{t-1}\cup D_t}\exp(s_{ik})}$, then the gradient of the loss function~(\ref{Sample-ProtoInfoNCE}) has the following form if we focus on the sample $j$ drawn from $S_i$:
\begin{equation}
\small
\nabla_\theta L_{i,j}=-\nabla_\theta s_{ij}+\sum_{k\sim R_{t-1}\cup D_t} p_{ik}\nabla_\theta s_{ik}
\label{gradient}
\end{equation}
\noindent where $L_{i,j} = -\log\frac{\exp(s_{ij})}{\sum_{k\sim R_{t-1}\cup D_t}\exp(s_{ik})}$. Note that the gradient $\nabla_\theta L_{i,j}$ can be decomposed into two parts, including $-\nabla_\theta s_{ij}$ and $\sum_{k\sim R_{t-1}\cup D_t} p_{ik}\nabla_\theta s_{ik}$, while the latter part can be written as the expectation of the gradients $\mathbb{E}_{k\sim p_i}\nabla_\theta s_{ik}$. However, it is difficult to apply the \emph{classical Monte-Carlo} method to estimate expectations on various target distributions $p_i$. It is also difficult to directly adopt the \emph{classical Importance Sampling} method, since the partition function $\sum_{k\sim R_{t-1}\cup D_t}\exp(s_{ik})$ is intractable due to previous lost samples. To address the issues, we use the biased importance sampling method to approximate the intractable normalizing function and give a biased estimation of the gradients, while positive samples of specific class $i$ and samples from task $t$ cannot be estimated. Unlike the traditional sampling strategy, each instance $(x_j,y_j)$ is sampled from the discrete distribution of the specific class instead of the entire data distribution, avoiding the mixing of positive and negative samples. Formally, we denote $g^{(m)}=[g_{j}^{(m)}]_{j\sim S_m}$ as the proposal distribution of the specific class $m$, and $J_m$ represents the samples drawn from $g^{(m)}$. Accordingly, we get the biased estimate of the stochastic gradient as below using biased importance sampling and Monte-Carlo techniques:
\begin{align}
\small
\nabla_\theta L_{i,j}\approx &-\nabla_\theta s_{ij}+\sum_{k\sim J_i\cup D_t}\frac{p_{ik}}{W_i}\nabla_\theta s_{ik} \displaybreak[3] \nonumber\\
    &+\sum_{m\sim \hat{C}_{t-1}\backslash i}\frac{1}{|J_m|}\sum_{k\sim J_m}\frac{\omega_{ik}^{(m)}}{W_i}\nabla_\theta s_{ik}\,,\label{gradient_IS}
\end{align}
\noindent where $\omega_{ik}^{(m)}:= p_{ik}^{(m)}/{g_{k}^{(m)}}$ and $W_i := \sum_{k\sim J_i\cup D_t}p_{ik} + \sum_{m\sim \hat{C}_{t-1}\backslash i}\frac{1}{|J_m|}\sum_{k\sim J_m}\omega_{ik}^{(m)}$. Finally, our modified version of contrastive loss can be obtained as the sum of the antiderivative of the gradient in the specific task $t$:
\begin{equation}
\small
    L_{\mathrm{Sample-NCE}}(\theta;t) =\sum_{i\sim Y_t}\sum_{j\sim S_i} L_{i,j} + \sum_{i\sim \hat{C}_t} \sum_{j\sim J_i} \hat{L}_{i,j},
    \label{eq:total}
\end{equation}
where:
\begin{equation}
\small
    \hat{L}_{i,j}=-\log\frac{\exp(s_{ij})}{\sum\limits_{k\sim J_i\cup D_t}\exp(s_{ik})+\sum\limits_{m\sim \hat{C}_{t-1}\backslash i}\sum\limits_{k\sim J_{m}}\frac{\exp(s_{ik})}{g_{k}^{(m)}|J_m|}}.
    \label{contrastive_IS}
\end{equation}

The loss function can be applied directly to samples drawn from mini-batches. Now, we propose a method to recover data distributions from previous tasks using weighted samples in the replay buffer. For more in-depth technical details, please refer to Appendix A.

\begin{algorithm}[!t]
	
	\caption{\emph{\textbf{C}ontrastive \textbf{C}ontinual \textbf{L}earning via \textbf{I}mportance \textbf{S}ampling (CCLIS).}}
	\label{alg1}
	\begin{algorithmic}[1]
            \REQUIRE Dataset $\{D_t\}_{t=1}^T$, model $f_\theta$, buffer $M\gets\{\}$, hyper-parameter $\lambda$, learning rate $\eta$,
		\STATE Initialize model parameters $\theta$,
            \FOR{t=1,\dots,T}
            \FOR{batch $B_t\sim D_t$}
            \STATE $B_M\sim M$
		\STATE  $B \gets B_t\cup B_M$
            \STATE $L\gets L_{\mathrm{Sample-NCE}}(\theta; B)$ with Eq.(\ref{eq:total})
            \IF{t $>$ 1}
            \STATE $L\gets L + \lambda\cdot L_{PRD}(\theta; \theta_{prev}, B)$ with Eq.(\ref{PRD})
            \ENDIF
            \STATE $\theta \gets \theta - \eta\nabla_{\theta}L$ 
            \ENDFOR
            \STATE Calculate the proposal distribution $g$ with Eq.(\ref{eq:score})
            \STATE $M\gets SELECT(M\cup D_t,g)$ by weighted sampling without replacement 
            \STATE $\theta_{prev}\gets \theta$
            \ENDFOR
		
	\end{algorithmic}  
\end{algorithm}

\subsection{Replay Buffer Selection for Estimated Variance Minimization}

While using the modified contrastive learning objective can help recover the previous data distributions, one may still benefit from selecting a suitable proposal distribution, which is crucial to the effectiveness of the importance sampling method. 
The closer the proposal distribution to the target distribution, the better the model performance, and the gap between the proposal distribution and the target distribution would cause the estimated variance, leading to a drop in performance. However, it is not trivial to select a proposal distribution $g^{(m)}$ of a specific class $m$ in $\hat{C}_{t}$, because there are $n_t:=|\hat{C}_{t}|-1$ target distributions, except the distribution $p_{m}$ to which the proposal distribution must be close. To overcome this, we propose \emph{RBS} based on the estimated minimization of variance, which can better recover the previous data distributions.

For the set of a specific class $m$ sampled from $R_t$, the target distribution with the fixed prototype $c_i$ is $p_{i}$. Although samples from $D_t$ should be considered, we can approximately minimize the estimated variance caused by the gap between the proposal distribution $g^{(m)}$ and the target distributions $\hat{p}_{i}^{(m)}:=[\frac{\exp{s_{ij}}}{\sum_{j\sim R_{t-1}}\exp{s_{ij}}}]_{j\sim S_m}$ for all $i\in \hat{C}_{t}\backslash m$. We prove in Appendix A that we can minimize an upper bound on the estimated variance by optimizing the mean of Kullback–Leibler (KL) divergences of the proposal distribution from the target distributions:
\begin{equation}
\small
    \min\frac{1}{n_t}\sum_{i=1}^{n_t} KL(\hat{p}_{i}^{(m)}||g^{(m)}).
    \label{KL}
\end{equation}
It is noticed that the above objective can be solved directly. The function achieves the minimum when the proposal distribution is equal to the mean of the target distributions:
\begin{equation}
\small
    g^{(m)}=\frac{1}{n_t}\sum_{i=1}^{n_t}\hat{p}_{i}^{(m)},
    \label{eq:score}
\end{equation}

As inferred from Equation~(\ref{eq:score}), in order to reduce the estimated variance, we need to select negative samples that are close to the prototypes of all different categories on average, that is, hard negative samples that are difficult to distinguish by a classifier. More details can be seen in Appendix A.

\subsection{PRD for Contrastive Continual Learning}

Although our method readily provides a more transferable representation space, it is necessary to preserve the learned knowledge of previous tasks for stable representation features and prototype-instance relationships, which is crucial to limit the gap between the proposal distribution and the target distributions. To achieve this objective, we propose a \emph{PRD} to regulate changes in the features of current and previous models through self-distillation \cite{fang2021seed}, which can preserve learned knowledge to overcome Catastrophic Forgetting. Assuming that the samples are drawn from the mini-batch $B$ at the time step $t$ and $Y_{1:t}$ is the class set of $D_{1:t}$, the prototype-instance score vector is the normalized similarity of the sample $x_i$ to the prototypes that appear, which is formally defined as $q(x_j; \theta) = [q_{ij}]_{i\sim Y_{1:t}}$, where $q_{ij}$ denotes the normalized prototype-instance similarity between prototype $i$ and sample $j$:
\begin{equation}
\small
    q_{ij} = \frac{\exp(s_{ij})}{\sum_{i=1}^{|Y_{1:t}|}\exp(s_{ij})}.
    \label{proto-instance sim}
\end{equation}

Our proposed \emph{PRD} loss measures the relation of prototype-instance similarity between the previous and current representation spaces as a self-distillation method. Formally, we denote the parameters of previous and current models as $\theta_{prev}$ and $\theta_{cur}$ respectively, and the \emph{PRD} loss is defined as:
\begin{equation}
\small
    L_{\mathrm{PRD}}(\theta_{cur};\theta_{prev}, B)=\sum_{x_j\sim B}-q(x_j; \theta_{prev})\log q(x_j; \theta_{cur}).
    \label{PRD}
\end{equation}
By using the frozen model with previous representations, \emph{PRD} distill the learned relationship between prototypes and sample features into the current model, helping to keep the target distributions stable to better apply the importance sampling method and preserve previous knowledge better.

\subsection{Objective Function}
Similar to past literature\cite{cha2021co2l}, our loss function is composed of contrastive loss $L_{\mathrm{Sample-NCE}}$ and distillation loss $L_{\mathrm{PRD}}$ with the trade-off parameter $\lambda$ as following:
\begin{equation}
\small
    L(\theta)=L_{\mathrm{Sample-NCE}}(\theta) + \lambda \cdot L_{\mathrm{PRD}}(\theta),
    \label{objective}
\end{equation}
and all the trainable parameters, including prototypes and model parameters, can be updated through optimizing the objective function.

\begin{table*}[!t]
    \centering
    \small
    \begin{tabular}{cccccccc}
	\hline
\multirow{2}{*}
{\textbf{Buffer}}&\textbf{Dataset}&\multicolumn{2}{c}{\textbf{Seq-Cifar-10}}&\multicolumn{2}{c}{\textbf{Seq-Cifar-100}}&\multicolumn{2}{c}{\textbf{Seq-Tiny-ImageNet}}\\
            &\textbf{Scenario}&\textbf{Class-IL}&\textbf{Task-IL}&\textbf{Class-IL}&\textbf{Task-IL}&\textbf{Class-IL}&\textbf{Task-IL}\\
\hline
\multirow{7}{*}{200}&ER&49.16$\pm$2.08&91.92$\pm$1.01&21.78$\pm$0.48&60.19$\pm$1.01&8.65$\pm$0.16&38.83$\pm$1.15\\
    &iCaRL&32.44$\pm$0.93&74.59$\pm$1.24&28.0$\pm$0.91&51.43$\pm$1.47&5.5$\pm$0.52&22.89$\pm$1.83\\
    &GEM&29.99$\pm$3.92&88.67$\pm$1.76&20.75$\pm$0.66&58.84$\pm$1.00&-&-\\
    &GSS&38.62$\pm$3.59&90.0$\pm$1.58&19.42$\pm$0.29&55.38$\pm$1.34&8.57$\pm$0.13&31.77$\pm$1.34\\
    &DER&63.69$\pm$2.35&91.91$\pm$0.51&31.23$\pm$1.38&63.09$\pm$1.09&13.22$\pm$0.92&42.27$\pm$0.90\\
    
    &Co2L&65.57$\pm$1.37&93.43$\pm$0.78&27.73$\pm$0.54&54.33$\pm$0.36&13.88$\pm$0.40&42.37$\pm$0.74\\
    &GCR&64.84$\pm$1.63&90.8$\pm$1.05&33.69$\pm$1.40&64.24$\pm$0.83&13.05$\pm$0.91&42.11$\pm$1.01\\
    &\textbf{CCLIS(Ours)}&\textbf{74.95$\pm$0.61}&\textbf{96.20$\pm$0.26}&\textbf{42.39$\pm$0.37}&\textbf{72.93$\pm$0.46}&\textbf{16.13$\pm$0.19}&\textbf{48.29$\pm$0.78}\\
\hline
\multirow{7}{*}{500}&ER&62.03$\pm$1.70&93.82$\pm$0.41&27.66$\pm$0.61&66.23$\pm$1.52&10.05$\pm$0.28&47.86$\pm$0.87\\
    &iCaRL&34.95$\pm$1.23&75.63$\pm$1.42&33.25$\pm$1.25&58.16$\pm$1.76&11.0$\pm$0.55&35.86$\pm$1.07\\
    &GEM&29.45$\pm$5.64&92.33$\pm$0.80&25.54$\pm$0.65&66.31$\pm$0.86&-&-\\
    &GSS&48.97$\pm$3.25&48.97$\pm$3.25&21.92$\pm$0.34&60.28$\pm$1.18&9.63$\pm$0.14&36.52$\pm$0.91\\
    &DER&72.15$\pm$1.31&93.96$\pm$0.37&41.36$\pm$1.76&71.73$\pm$0.74&19.05$\pm$1.32&53.32$\pm$0.92\\

    &Co2L&74.26$\pm$0.77&95.90$\pm$0.26&36.39$\pm$0.31&61.97$\pm$0.42&20.12$\pm$0.42&53.04$\pm$0.69\\
    &GCR&74.69$\pm$0.80&94.44$\pm$0.32&45.91$\pm$1.30&71.64$\pm$2.10&19.66$\pm$0.68&52.99$\pm$0.89\\
    &\textbf{CCLIS(Ours)}&\textbf{78.57$\pm$0.25}&\textbf{96.18$\pm$0.43}&\textbf{46.08$\pm$0.67}&\textbf{74.51$\pm$0.38}&\textbf{22.88$\pm$0.40}&\textbf{57.04$\pm$0.43}\\
\hline
\end{tabular}

    \caption{Class-IL and Task-IL Continual Learning. We report our performance and the results of rehearsal-based baselines on Seq-Cifar-10, Seq-Cifar-100 and Seq-Tiny-ImageNet with memory sizes 200 and 500, all of which are averaged across ten independent trails.}
\label{tab:acc_result}

\end{table*}

\begin{table*}[!t]
    \centering
    \small
    \begin{tabular}{cccccccc}
	\hline
\multirow{2}{*}
{\textbf{Buffer}}&\textbf{Dataset}&\multicolumn{2}{c}{\textbf{Seq-Cifar-10}}&\multicolumn{2}{c}{\textbf{Seq-Cifar-100}}&\multicolumn{2}{c}{\textbf{Seq-Tiny-ImageNet}}\\
&\textbf{Scenario}&\textbf{Class-IL}&\textbf{Task-IL}&\textbf{Class-IL}&\textbf{Task-IL}&\textbf{Class-IL}&\textbf{Task-IL}\\
\hline
\multirow{4}{*}{200}&DER&35.79$\pm$2.59& 6.08$\pm$0.70&62.72$\pm$2.69&25.98$\pm$1.55 &64.83$\pm$1.48&40.43$\pm$1.05\\
&Co2L&36.35$\pm$1.16& 6.71$\pm$0.35
&67.06$\pm$0.01&37.61$\pm$0.11
 &73.25$\pm$0.21&47.11$\pm$1.04
\\
&GCR&32.75$\pm$2.67& 7.38$\pm$1.02&57.65$\pm$2.48&24.12$\pm$1.17 &65.29$\pm$1.73&40.36$\pm$1.08\\
&\textbf{CCLIS(Ours)}&\textbf{22.59$\pm$0.18}&\textbf{2.08$\pm$0.27
} &\textbf{46.89$\pm$0.59}& \textbf{14.17$\pm$0.20
}&\textbf{62.21$\pm$0.34}&\textbf{33.20$\pm$0.75
}\\
\hline
\multirow{4}{*}{500}&DER&24.02$\pm$1.63&3.72$\pm$0.55 &49.07$\pm$2.54& 25.98$\pm$1.55&59.95$\pm$2.31&28.21$\pm$0.97\\

    &Co2L&25.33$\pm$0.99&3.41$\pm$0.8
 &51.96$\pm$0.80&26.89$\pm$0.45
 &65.15$\pm$0.26&39.22$\pm$0.69
\\
    &GCR&19.27$\pm$1.48&3.14$\pm$0.36 &\textbf{39.20$\pm$2.84}& 15.07$\pm$1.88&56.40$\pm$1.08&27.88$\pm$1.19\\
    &\textbf{CCLIS(Ours)}&\textbf{18.93$\pm$0.61}&\textbf{1.69$\pm$0.12
} &42.53$\pm$0.64& \textbf{12.68$\pm$1.33
}&\textbf{50.15$\pm$0.20}&\textbf{23.46$\pm$0.93
}\\
\hline
\end{tabular}
    \caption{Average Forgetting (lower is better) in Continual Learning, all averaged across five independent trails. For simplicity, we only compare our method with recent baselines.}
\label{tab:forgetting_result}

\end{table*}

\section{Experimental Setup}

\textbf{Research Questions.} 
The remainder of the paper is guided by subsequent research questions: 
(\textbf{RQ1}) Can our proposed method outperform other baselines in various datasets in the continual learning setting? (\textbf{RQ2}) What is the impact of each component, that is, importance sampling and \emph{PRD}, on the performance of our method? (\textbf{RQ3}) What is the connection between the components in our method?

\textbf{Baselines.} We consider two continua learning settings: Class-Incremental Learning (Class-IL) and Task-Incremental Learning (Task-IL). 
%
For evaluation purposes, we take 
the following state-of-the-art baselines:  
(1) ER (Experience Replay)~\cite{riemer2018learning} is a rehearsal-based method with random sampling in memory retrieval and reservoir sampling in memory updates. (2) iCarL \cite{rebuffi2017icarl}, representation learning with an incremental buffer proposes learning representation embeddings and classifiers in the Class-IL setting. (3) GEM \cite{lopez2017gradient} uses episodic memory to minimize negative knowledge transfer and Catastrophic Forgetting. (4) GSS \cite{aljundi2019gradient}, a gradient-based sample selection method that maximizes the variation of the gradients of the replay buffer samples. (5) DER (Dark Experience Replay) and DER++ \cite{buzzega2020dark}, which promotes the consistency of the logit of the current task with the previous one. (6) Co2L (Contrastive Continual Learning)~\cite{cha2021co2l}, a rehearsal-based continual learning algorithm with instance-wise contrastive loss and self-distillation.
(7) GCR (Gradient Coreset-based Replay)~\cite{tiwari2022gcr}, a new gradient-based strategy for \emph{RBS} in continual learning.

\textbf{Datasets.} 
We verify the effectiveness of our method in Class-IL and Task-IL on the following three datasets: Seq-cifar-10 \cite{krizhevsky2009learning}, Seq-cifar-100 \cite{krizhevsky2009learning}, and Seq-tiny-imagenet \cite{le2015tiny}, and all of them are commonly used as benchmarks in previous work\cite{tiwari2022gcr}. Seq-cifar-10 is the set of splits of Cifar10. Following general settings, we divide it into 5 tasks with two classes per task. Seq-cifar-100 is constructed from Cifar100 and divided into five tasks with 20 classes per task, similar to \cite{tiwari2022gcr}. Seq-tiny-imagenet splits Tiny-imagenet \cite{russakovsky2015imagenet} into ten tasks of 20 classes each. Both Class-IL and Task-IL share the same dataset splitting settings.

\textbf{Settings.} 
We train on three datasets using ResNet-18 \cite{he2016deep} as the backbone and report our results with memory sizes 200 and 500. All model parameters, including prototypes and backbone parameters, are trained by backpropagation.
%
To be consistent with previous contrastive continual learning, we freeze model parameters and train a linear classifier to evaluate on samples $R_T$. We adopt the same strategy for training the linear classifier as in~\cite{cha2021co2l} to avoid suffering from the class-imbalanced issue. \emph{Accuracy} and \emph{Average Forgetting} will be adopted to evaluate all the methods in our experiments following the CL literature~\cite{chaudhry2018riemannian, tiwari2022gcr}. Data preparation, model architecture, hyper-parameter selection, and training details, etc., can be referred to Appendix B.

\section{Results and Discussions}
In this section, we conduct sufficient experiments to answer the research questions. Additional comparisons, ablations, and analysis are shown in Appendix C.
\subsection{RQ1:Performance on Squentially Arriving Tasks}
To be consistent with most contrastive continual learning algorithms, we explore whether our method can learn high-quality embeddings to overcome Catastrophic Forgetting. To achieve this, we freeze the backbone parameters and train a linear classifier from scratch to verify the effectiveness of our algorithm. The results in Table \ref{tab:acc_result} show that our method outperforms most state-of-the-art baselines on various datasets in non-stationary task distributions, and the average forgetting results in Table~\ref{tab:forgetting_result} show that our method can effectively alleviate the forgetting issue.

\subsection{RQ2: Ablation Study}
\label{sec:RQ2}

\textbf{Effectiveness of importance sampling.} To validate the effectiveness of the importance sampling method, we introduce three variants of \emph{CCLIS:} Without importance sampling (\emph{IS}) and \emph{PRD}, where we train only with our prototype-based contrastive loss with the random sampling strategy; with importance sampling (\emph{IS}) only, where we optimize the Sample-NCE without self-distill loss; With \emph{PRD} only, where we preserve samples randomly and only use distillation loss to preserve previous knowledge. We compare our methods with the three variants in Seq-Cifar-10 with 200 buffered samples and show the results in Table \ref{tab:acc_ablation} in the Class-IL scenario. It is noticed that without \emph{PRD} loss, importance sampling brings about an improvement of $3.4\%$, while we gain a $1.4\%$ improvement with \emph{PRD} loss on the average.

Furthermore, to illustrate whether \emph{RBS} or the entire importance sampling method leads to improvement, we perform ablation experiments on Seq-Cifar-10 under the Class-IL and Task-IL scenarios. Table \ref{tab:IS_ablation} illustrates that there is a gap between the data distributions of previous tasks and the preserved samples only with \emph{RBS}, which leads to performance degradation; our method via importance sampling can recover the distributions of previous tasks by eliminating bias caused by sample selection, resulting in better performance than random sampling.

\textbf{Effectiveness of \emph{PRD} loss.} Similar to the importance sampling ablation study, we verify the effectiveness of \emph{PRD} loss in Table \ref{tab:acc_ablation}. According to the table,  the improvement brought by \emph{PRD} loss is $31.0\%$ with random sampling, while there is a $28.4\%$ relative improvement related to \emph{PRD} loss with importance sampling. The increased performance growth shows that the \emph{PRD} loss can preserve knowledge from previous tasks to alleviate Catastrophic Forgetting.

\begin{table}[!t]
    \centering
     \small
    \begin{tabular}{cccc}
        \hline  
            &\textbf{Sampling}&\textbf{PRD}&\textbf{200}\\
        \hline
            w/o IS and PRD&Random&$\times$&56.43$\pm$1.54\\
            w/ IS only&IS&$\times$&58.37$\pm$ 1.17\\
            w/ PRD only&Random&$\checkmark$&73.95$\pm$1.12\\
            \textbf{CCLIS(ours)}&IS&$\checkmark$&\textbf{74.95$\pm$0.61}\\
        \hline
\end{tabular}

    \caption{Ablation study of importance sampling and \emph{PRD}. We train our model on the Seq-CIFAR-10 dataset with 200 buffered samples under a Class-IL scenario to explore the effectiveness of importance sampling and \emph{PRD.}}
    \label{tab:acc_ablation}

\end{table}

\begin{table}[!t]
    \centering
    \small
    \begin{tabular}{cccc}
        \hline  
        \multirow{1}{*}&\multirow{1}{*}&\multicolumn{1}{c}{\small{\small{\textbf{200}}}}&\multicolumn{1}{c}{\small{\small{\textbf{500}}}}\\
        \hline
            \multirow{2}{*}{\textbf{Class-IL}}&\small{RBS only} &\small{54.18$\pm0.88$}&\small{67.50$\pm0.50$}\\
            &\small{IS} &\small{\textbf{57.24$\pm$1.46}}&\small{\textbf{68.50$\pm$0.76}}\\
        \hline
            \multirow{2}{*}{\textbf{Task-IL}}&\small{RBS only} &\small{85.00$\pm1.20$}&\small{91.19$\pm0.30$}\\
            &\small{IS} &\small{\textbf{86.28$\pm$0.74}}&\small{\textbf{91.34$\pm$0.20}}\\
        \hline
\end{tabular}
    \caption{Ablation study of the importance sampling method. One experiment trains only with \emph{RBS}, while the other trains with importance sampling. All results are given in Seq-CIFAR-10 under the Class-IL and Task-IL scenarios. The improvements indicate that importance sampling can recover the data distributions of previous tasks, eliminating the bias caused by sample selection.}
    \label{tab:IS_ablation}

\end{table}

\if0
\begin{table}[!t]
    \centering
    \small
    \begin{tabular}{ccccc}
        \hline  
        \multirow{2}{*}&\multicolumn{2}{c}{\small{\small{\textbf{200}}}}&\multicolumn{2}{c}{\small{\small{\textbf{500}}}}\\
        &\small{\textbf{Class-IL}}&\small{\textbf{Task-IL}}&\small{\textbf{Class-IL}}&\small{\textbf{Task-IL}}\\
        \hline
            \small{RBS only} &\small{54.18}&\small{85.00}&\small{67.50}&\small{91.19}\\
            \small{IS} &\small{\textbf{57.24}}&\small{\textbf{86.28}}&\small{\textbf{68.50}}&\small{\textbf{91.34}}\\
        \hline
\end{tabular}

    \caption{Ablation study of the importance sampling method. One experiment trains only with \emph{RBS}, while the other trains with importance sampling. All results are given in Seq-CIFAR-10 under the Class-IL and Task-IL scenarios. Improvements indicate that importance sampling can recover data distributions from previous tasks, eliminating bias caused by sample selection.}
    \label{tab:IS_ablation}

\end{table}
\fi

\subsection{RQ3: Analysis of the Relationship between Components}

\textbf{Connections between importance sampling and \emph{PRD} loss.} 
As shown in experiments, the results validate that importance sampling and \emph{PRD} loss can preserve the knowledge of previous tasks. We also find that \emph{PRD} brought a large performance boost to the importance sampling method in the ablation study.
The improvement in performance indicates that \emph{ PRD} narrows the gap between the proposal distribution and the target distributions by keeping the relation between prototypes and instances stable, reducing the estimated variance in the sampling of importance.

To further analyze the joint impact of importance sampling and \emph{PRD} loss on performance, we train the model by gradually tuning the trade-off parameter $\lambda$. As in Figure \ref{fig:curve}, the test accuracy gradually increases and stays steady after reaching the highest point with increasing parameters, indicating that \emph{ PRD} complements the importance sampling based contrastive loss by stabilizing the relationship between prototypes and instances in online settings.

\textbf{Connections between importance sampling and hard negative mining.}
By re-examining \emph{RBS}, we find that minimizing the estimated variance of importance sampling is equivalent to preserving hard negatives relative to prototypes of different classes. We have made an intriguing discovery and visualized the embedding space on Seq-Cifar-10 to compare importance sampling with random sampling. The visualization in Figure~\ref{fig:tsne} indicates that our method draws samples distributed at the edges of the clusters and learns better high-quality representations than random sampling.

\begin{figure}[!t]
    \centering
    \includegraphics[scale=0.45]{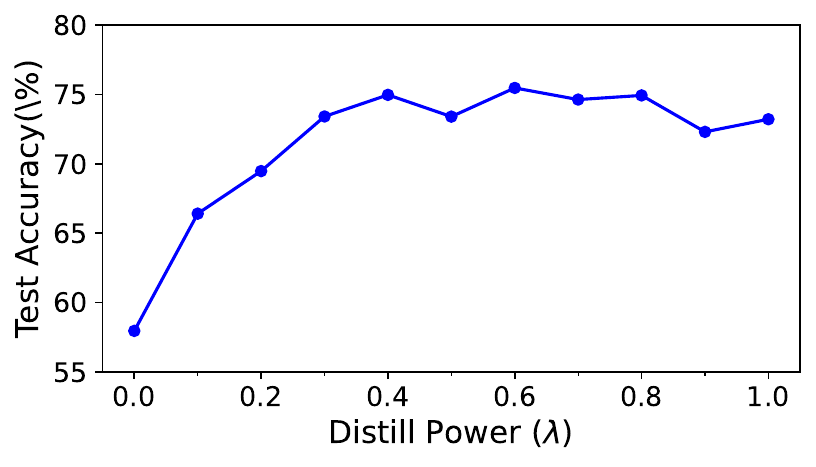}  

    \caption{Performance variant with the distill power $\lambda$ in Seq-CIFAR-10 under Class-IL scenario. \emph{PRD} effectively enhances the performance of importance sampling-based contrastive learning by successfully maintaining the prototype-instance relationship.}
    \label{fig:curve}
\end{figure}

\begin{figure}[!t]
    \centering
    \includegraphics[width=.35\textwidth,height=.30\textwidth]{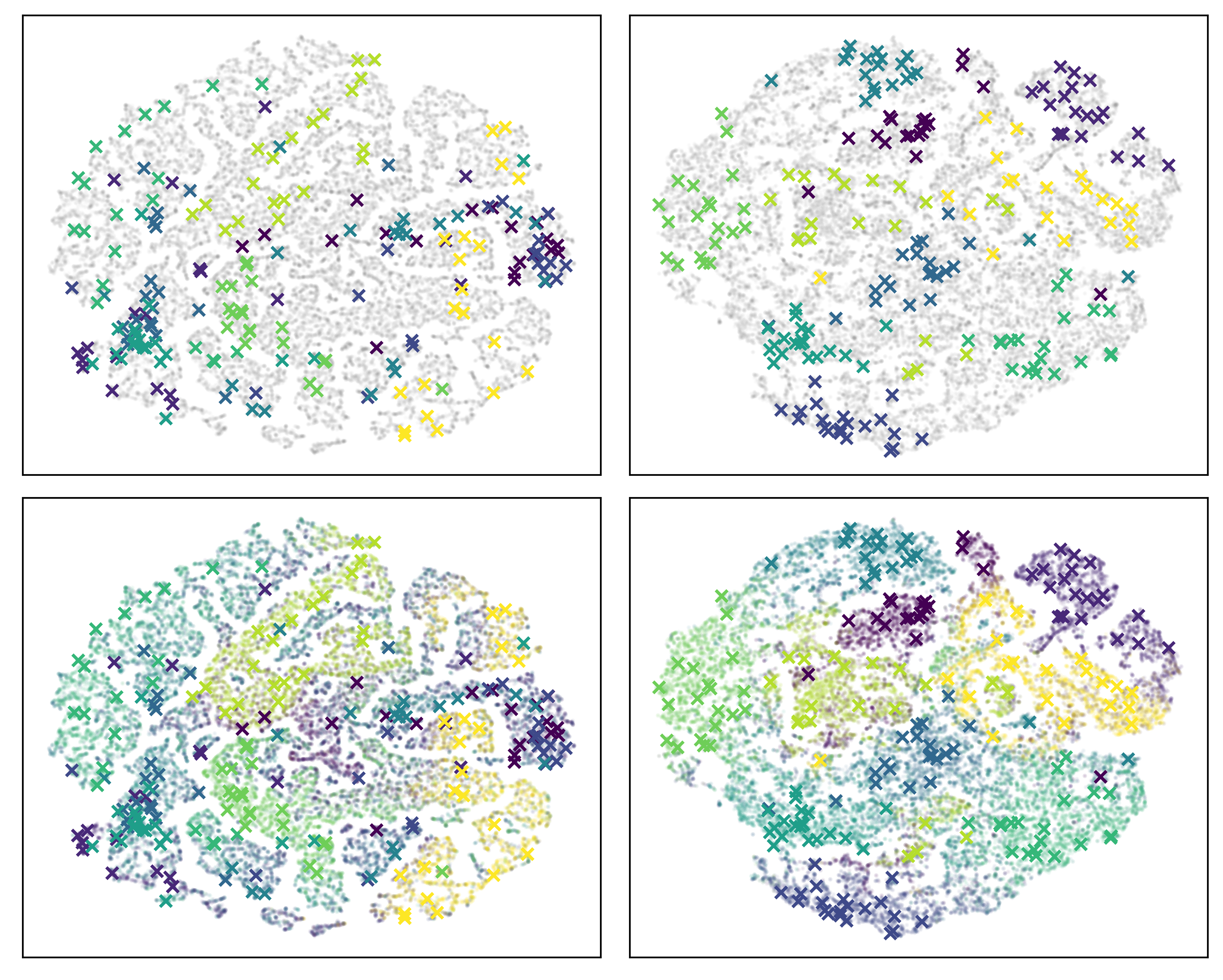}

    \caption{Top: t-SNE visualization of feature embeddings from replay buffer (colored) and all (gray) training samples of Seq-Cifar-10. Bottom: Similar to Top, but all samples are colored to distinguish different clusters clearly. Left: the buffer features drawn by Co2L, a contrastive continual learning algorithm with random sampling, are spread uniformly in clusters. Right: The buffer features sampled by \emph{CCLIS} are mainly distributed at the edge of the clusters. These can be viewed as hard negatives of other classes to help the model learn high-quality contrastive representations.}
    \label{fig:tsne}

\end{figure}

\section{Conclusion, Limitations and Future Work}

In this paper, based on the previous contrastive continual learning method, we introduce an improved contrastive loss through importance sampling, yielding improved contrastive representations online. Furthermore, our \emph{RBS} method, informed by importance weight, retains hard negative samples for future learning. In addition, we propose \emph{PRD} to maintain the prototype-instance relation. Experiments show that our method can better preserve previous task knowledge to overcome Catastrophic Forgetting. 

However, a limitation comes from the recovery of data distributions, since we only recover the distributions of the last visible samples, which is still far from recovering the entire data distribution. We plan to develop algorithms to address this challenge in future work.

\bibliography{aaai24}

\clearpage
\newpage

\section{A. Contrastive Continual Learning with Importance Sampling}
\subsection{Derivation for Sample-NCE}
\label{apx: derivation}
In contrastive continual learning, contrastive loss is utilized to acquire high-quality representations within continual learning scenarios. In this work, we shift our focus to a prototype-based InfoNCE, denoted as ProtoNCE, to improve representation embeddings:
\begin{equation}
    \sum_{i=1}^N-\log\frac{\exp(s_{y_ii})}{\sum_{j=1}^N\exp(s_{y_ij})}.
    \label{apx: origin protoNCE}
\end{equation}

Given the constraints of our computational resources, it is not feasible to train our model on the entire dataset in an online setting. To address the Catastrophic Forgetting challenge, we employ a constrained cache to retain a subset of samples, akin to prior rehearsal-based methods. Additionally, we adapt the ProtoNCE approach to accommodate training on tasks that arrive sequentially:
\begin{equation}
   \sum_{i\sim \hat{C}_{t-1} \cup Y_t}\sum_{j\sim S_i}-\log\frac{\exp(s_{ij})}{\sum_{k\sim R_{t-1}\cup D_t}\exp(s_{ik})}.
   \label{apx: CacheNCE}
\end{equation}

To simplify the deduction, we focus on the specific prototype $c_i$ of the previous task and the sample $j$ drawn from $S_i$, then the loss gradient~\ref{apx: CacheNCE} has the following form:
\begin{equation}
\begin{split}
    \nabla_\theta L_{i,j}=&-\nabla_\theta s_{ij}+\sum_{k\sim S_i\cup D_t} p_{ik}\nabla_\theta s_{ik}\\
    &+\sum_{m\sim \hat{C}_{t-1}\backslash i}\rho_{i}^{(m)}\sum_{k\sim S_{m}} \tilde{p}_{ik}\nabla_\theta s_{ik},
    \label{gradient}
\end{split}
\end{equation}
where $p_{ij}=\frac{\exp(s_{ij})}{\sum_{k\sim R_{t-1}\cup D_t}\exp(s_{ik})}$ and $\tilde{p}_{ij}=\frac{p_{ij}}{\sum_{k\sim S_{y_j}}p_{ik}}$ with the normalized function $\rho_{i}^{(m)}:=\sum_{k\sim S_m}p_{ik}$.

Since the samples of current tasks are not available during the replay buffer selection, we can express the sum of the negative gradients associated with past tasks, denoted by the last term of Equation~\ref{gradient}, as the expectation over these gradients:
\begin{equation}
\begin{split}
    \nabla_\theta L_{i,j}=&-\nabla_\theta s_{ij}+\sum_{k\sim S_i\cup D_t} p_{ik}\nabla_\theta s_{ik}\\
    &+\sum_{m\sim \hat{C}_{t-1}\backslash i}\rho_{i}^{(m)}\mathbb{E}_{k\sim\tilde{p}_{i}^{(m)}}\nabla_\theta s_{ik}.
    \label{gradient_E}
\end{split}
\end{equation}
Then we apply importance sampling to the expectation of the negative gradients in Equation~\ref{gradient_E} and obtain the following:
\begin{align}
\small
\nabla_\theta L_{i,j}&\approx -\nabla_\theta s_{ij}+\sum_{k\sim S_i\cup D_t} p_{ik}\nabla_\theta s_{ik} \displaybreak[3] \nonumber\\
    &\quad\;+\sum_{m\sim \hat{C}_{t-1}\backslash i}\rho_{i}^{(m)}\mathbb{E}_{k\sim g^{(m)}}(\frac{\tilde{p}_{ik}^{(m)}}{g_k^{(m)}}\nabla_\theta s_{ik}) \nonumber\\
    &= -\nabla_\theta s_{ij}+\sum_{k\sim S_i\cup D_t} \frac{\exp{s_{ik}}}{\sum_{l\sim R_{t-1}\cup D_t}\exp{s_{il}}}\nabla_\theta s_{ik} \displaybreak[3] \nonumber\\
    &\quad\;+\sum_{m\sim \hat{C}_{t-1}\backslash i}\mathbb{E}_{k\sim g^{(m)}}(\frac{\exp{s_{ik}}/g_k^{(m)}}{\sum_{l\sim R_{t-1}\cup D_t}\exp{s_{il}}}\nabla_\theta s_{ik}).\nonumber\\
    \label{gradient_IS}
\end{align}
However, since the new tasks are not yet available during the sample selection, we cannot directly apply unbiased importance sampling to estimate the gradient~\ref{gradient_E}. To address this challenge, we estimate the denominator $W:=\sum_{j\sim R_{t-1}\cup D_t}\exp{s_{ik}}$ of Equation~\ref{gradient_IS} with importance sampling, which can be viewed as an average with uniform distribution $U(|S_m|)$ for specific class $m$:
\begin{align}
\small
    W&=\sum_{k\sim S_i\cup D_t}\exp{s_{ik}}+\sum_{m\sim \hat{C}_{t-1}\backslash i}\sum_{k\sim S_{m}}\exp{s_{ik}} \nonumber\displaybreak[3]\\
    &\approx\sum_{k\sim S_i\cup D_t}\exp{s_{ik}}+\sum_{m\sim \hat{C}_{t-1}\backslash i}|S_m|\mathbb{E}_{k\sim U(|S_m|)}\exp{s_{ik}}\nonumber\displaybreak[3]\\
    &=\sum_{k\sim S_i\cup D_t}\exp{s_{ik}}+\sum_{m\sim \hat{C}_{t-1}\backslash i}\mathbb{E}_{k\sim g^{(m)}}\frac{\exp{s_{ik}}}{g_k^{(m)}}. \label{apx: eq_total_E}
\end{align}

Assuming that for each $m$, we draw and preserve $|J_m|$ samples and apply \emph{Monte-Carlo} to estimate the expectation with buffered samples. The approximation of $W$ is made as the following formula:
\begin{equation}
    \hat{W}\approx \sum_{k\sim J_i\cup D_t}\exp{s_{ik}}+\sum_{m\sim \hat{C}_{t-1}\backslash i}\frac{1}{|J_m|}\sum_{k\sim J_m}(\exp{s_{ik}})/g_k^{(m)}.
    \label{apx: eq_W_E}
\end{equation}

Using this estimator, we can apply the biased importance sampling method to the expectation of weighted gradients in the Equation~\ref{apx: eq_total_E} and make the approximation of the average gradients. Here we assume that the drawn samples $J_i$ could represent the samples from class $i$ and we use $\mu_{ij}:=-\nabla_\theta s_{ij}+\sum_{k\sim R_{t-1}\cup D_t\backslash(S_i\backslash J_i)} p_{ik}\nabla_\theta s_{ik}$ to approximate the gradient:
\begin{align}
\small
& \nabla_\theta L_{i,j} \approx \mu_{ij} \approx -\nabla_\theta s_{ij}+\sum_{k\sim J_i\cup D_t} \frac{\exp{s_{ik}}}{\hat{W}}\nabla_\theta s_{ik} \displaybreak[3] \nonumber\\
    &+\sum_{m\sim \hat{C}_{t-1}\backslash i} \frac{1}{|J_m|}\sum_{k\sim J_m}(\frac{\exp{s_{ik}}/g_k^{(m)}}{\hat{W}}\nabla_\theta s_{ik}).
    \label{apx: gradient_IS}
\end{align}

Finally, the Sample-NCE loss can be obtained as the anti-derivative of the gradient in the specific task $t$:
\begin{equation}
    L_{\mathrm{Sample-NCE}}(\theta;t) =\sum_{i\sim Y_t}\sum_{j\sim S_i} L_{i,j} + \sum_{i\sim \hat{C}_t} \sum_{j\sim J_i} \hat{L}_{i,j},
    \label{eq:total}
\end{equation}
where
\begin{small}
    \begin{equation}
    \hat{L}_{i,j}=-\log\frac{\exp(s_{ij})}{\sum\limits_{k\sim J_i\cup D_t}\exp(s_{ik})+\sum\limits_{m\sim \hat{C}_{t-1}\backslash i}\sum\limits_{k\sim S_{m}}\frac{\exp(s_{ik})}{g_{k}^{(m)}|J_m|}}.
    \label{contrastive_IS}
    \end{equation}
\end{small}

In particular, assuming that for class m, the sampling probability of the $|J_m|$ sampled instances is uniformly distributed as $1/|J_m|$, we can observe that our loss function degenerates into the loss function of prototype-based contrastive continual learning with random sampling. In other words, the original contrastive continual learning can be regarded as a special case of our method.

\subsection{Variance Estimation for Biased Importance Sampling}
Regarding the biased importance sampling method, the estimation error between the estimator and ground truth is primarily dependent on the bias and estimation variance, which are usually intractable. To propose a criterion for the selection of the proposal distribution, we use the delta method~\cite{liu2008monte} to provide an approximation for the estimation error. In this subsection, we mainly prove the following theorem to illustrate that we can minimize the estimation error by reducing the estimation variance of the importance weight.

\begin{theorem}
Assuming that the gradients of score functions are bounded, i.e., $||\nabla_{\theta}s_{ij}||_2 \leq M, \forall i,j$, we can have the following bound on the mean square error between the estimator $\hat{\mu}_{ij}$ and the gradient $\mu_{ij}$ for specific prototype $i$ and sample $j$:
\begin{align}
    \mathrm{MSE}(\hat{\mu}_{ij})\leq \sum_{m\sim C_{t-1}\backslash i}\frac{M^2}{|J_m|}(1+\mathrm{var}_{g^{(m)}}\frac{p_i^{(m)}}{g^{(m)}}),
\end{align}
where $g^{(m)}$ is the proposal distribution and $\omega_i^{(m)}$ is the importance weight for the specific class $m$.
\end{theorem}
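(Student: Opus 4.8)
The plan is to begin from the bias--variance decomposition of the mean square error for a vector estimator,
\begin{equation}
\mathrm{MSE}(\hat{\mu}_{ij}) = \mathbb{E}\big\|\hat{\mu}_{ij}-\mu_{ij}\big\|_2^2 = \big\|\mathbb{E}[\hat{\mu}_{ij}]-\mu_{ij}\big\|_2^2 + \mathrm{tr}\big(\mathrm{Cov}(\hat{\mu}_{ij})\big),
\end{equation}
and to bound the two terms separately. The only randomness in $\hat{\mu}_{ij}$ enters through the past-task draws $J_m\sim g^{(m)}$ for $m\in\hat{C}_{t-1}\backslash i$, both directly in the Monte-Carlo sums for those classes and indirectly through the shared normalizer $\hat{W}$, which also multiplies the anchor term $-\nabla_\theta s_{ij}$ and the class-$i$/current-task sums. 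The obstacle is precisely that $\hat{\mu}_{ij}$ is a self-normalized (ratio) estimator: $\hat{W}$ sits in every denominator and is correlated with the numerators, since both are built from the same samples. I would therefore invoke the delta method \cite{liu2008monte} to linearize $\hat{\mu}_{ij}$ about $\hat{W}=W$, producing first-order expressions for the bias and the covariance and, crucially, showing that the squared bias is of order $|J_m|^{-2}$ and hence dominated by the covariance term of order $|J_m|^{-1}$.

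Next I would exploit that the draws $J_m$ are independent across the classes $m$, so that to leading order the covariance is additive over $m$ with no surviving cross terms. For a single class $m$, the per-class contribution is a Monte-Carlo average of $|J_m|$ i.i.d.\ terms, each proportional to the importance weight $\omega_i^{(m)}=p_i^{(m)}/g^{(m)}$ times a score gradient, so the variance of the average carries a factor $1/|J_m|$. Collecting these gives a bound of the form
\begin{equation}
\mathrm{MSE}(\hat{\mu}_{ij}) \lesssim \sum_{m\sim \hat{C}_{t-1}\backslash i} \frac{1}{|J_m|}\,\mathrm{Var}_{g^{(m)}}\!\Big(\omega_i^{(m)}\,\nabla_\theta s_{ik}\Big).
\end{equation}

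The final step turns each per-class variance into the stated factor. Using $\mathrm{Var}(X)\le \mathbb{E}[\|X\|_2^2]$ together with the bounded-gradient assumption $\|\nabla_\theta s_{ik}\|_2\le M$,
\begin{equation}
\mathrm{Var}_{g^{(m)}}\!\big(\omega_i^{(m)}\nabla_\theta s_{ik}\big) \le \mathbb{E}_{g^{(m)}}\big[(\omega_i^{(m)})^2\|\nabla_\theta s_{ik}\|_2^2\big] \le M^2\,\mathbb{E}_{g^{(m)}}\big[(\omega_i^{(m)})^2\big].
\end{equation}
The key identity is that $\omega_i^{(m)}$ is an importance weight of mean one, since $\mathbb{E}_{g^{(m)}}[\,p_i^{(m)}/g^{(m)}\,]=\sum_k g_k^{(m)}\,p_{ik}^{(m)}/g_k^{(m)} = 1$ because $p_i^{(m)}$ is normalized; hence $\mathbb{E}_{g^{(m)}}[(\omega_i^{(m)})^2] = 1 + \mathrm{Var}_{g^{(m)}}(\omega_i^{(m)})$. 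Substituting this and summing over $m$ reproduces the claimed bound $\sum_{m}\frac{M^2}{|J_m|}\big(1+\mathrm{Var}_{g^{(m)}}\tfrac{p_i^{(m)}}{g^{(m)}}\big)$, and it makes transparent why driving $g^{(m)}$ toward $p_i^{(m)}$ (the replay-buffer selection criterion) shrinks the estimation error.

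I expect the delta-method step to be the main obstacle: I must justify linearizing the ratio in a non-asymptotic, finite-$|J_m|$ regime, control the correlation between $\hat{W}$ and each numerator induced by the shared samples, and argue that the remainder (the squared bias included) is genuinely higher order, so that an inequality rather than a mere first-order approximation is obtained. The additive-over-classes reduction also needs care, since the single denominator $\hat{W}$ couples all classes and only the leading-order expansion decouples them; any constant introduced by centering the gradients against the ratio's mean would have to be tracked or absorbed into the crude $\mathbb{E}[\omega^2\|\nabla s\|^2]\le M^2\mathbb{E}[\omega^2]$ bound used above.
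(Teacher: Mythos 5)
Your proposal is correct and follows the same skeleton as the paper's proof: bias--variance decomposition of the MSE, delta-method linearization of the self-normalized (ratio) estimator to show the squared bias is $O(|J_m|^{-2})$ and hence dominated by the $O(|J_m|^{-1})$ variance, additivity of the variance over the classes $m$, and the bounded-gradient assumption to extract $M^2$. The one place you genuinely diverge is the final step. The paper invokes the delta-method variance formula for self-normalized importance sampling, $\mathrm{var}_{g}(\hat{\mu}_{ij}) \approx \sum_m \frac{(\rho_i^{(m)})^2}{|J_m|}\,\mathrm{var}_{\pi}(H_i^{(m)})\,\bigl(1+\mathrm{var}_{g^{(m)}}(\omega_i^{(m)})\bigr)$, then bounds $\rho_i^{(m)}\le 1$ and $\mathrm{var}_\pi(H)\le M^2$; you instead bound the uncentered per-class Monte-Carlo variance directly by $\mathbb{E}_{g^{(m)}}[(\omega_i^{(m)})^2\|\nabla_\theta s_{ik}\|_2^2] \le M^2\,\mathbb{E}_{g^{(m)}}[(\omega_i^{(m)})^2]$ and use the exact identity $\mathbb{E}_{g^{(m)}}[\omega_i^{(m)}]=1$, hence $\mathbb{E}[(\omega_i^{(m)})^2]=1+\mathrm{var}_{g^{(m)}}(\omega_i^{(m)})$. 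Your route is more elementary at that point (an exact inequality plus the mean-one property, rather than Liu's rule-of-thumb approximation), but it quietly drops the centering that the ratio linearization produces: the leading-order per-class variance is $\mathrm{Var}_{g^{(m)}}\bigl(\omega_i^{(m)}(H-\tilde{\mu}_i)\bigr)$, not $\mathrm{Var}_{g^{(m)}}(\omega_i^{(m)}H)$, and bounding the centered version the same way would cost a factor of $4$ since $\|H-\tilde{\mu}_i\|\le 2M$. You flag exactly this in your closing paragraph, and the paper's own treatment is no more rigorous here---it absorbs the same issue into the heuristic $\mathrm{var}_\pi(H)(1+\mathrm{var}(\omega))$ form---so your proposal matches the paper's level of rigor and arrives at the same bound; the paper's formula keeps the extra factors $(\rho_i^{(m)})^2\le 1$ explicit, which is what makes its constant come out as $M^2$ rather than $4M^2$.
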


\begin{proof}
With the notations in the former section, we mainly focus on the gradient estimator proposed in equation~(\ref{apx: gradient_IS}) for specific prototype $i$:
\begin{align}
\small
\hat{\mu}_{ij}= &-\nabla_\theta s_{ij}+\sum_{k\sim J_i\cup D_t} \frac{\exp{s_{ik}}}{\hat{W}}\nabla_\theta s_{ik} \displaybreak[3] \nonumber\\
    &+\sum_{m\sim \hat{C}_{t-1}\backslash i} \frac{1}{|J_m|}\sum_{k\sim J_m}(\frac{\exp{s_{ik}}/g_k^{(m)}}{\hat{W}}\nabla_\theta s_{ik}).
    \label{apx: grad_estimator}
\end{align}
We denote $h_{ik}:= \nabla_\theta s_{ik}$, $\omega_{ik}^{(m)}:=\frac{\tilde{p}_{ik}}{g_{k}^{(m)}}$ and $z_{ik}^{(m)}:=\omega_{ik}^{(m)}h_{ik}$, where $\mathbb{E}_{g^{(m)}}\omega_i^{(m)} = 1$. Notice that $\Tilde{\mu}_i:=\mu_{ij} - (-\nabla_\theta s_{ij})$ is independent with $j$, we can ignore the constant and focus on the rest of the estimation:
\begin{align}
\small
&\frac{\sum_{m\sim \hat{C}_{t-1}\backslash i}\frac{\rho_i^{m}}{|J_m|}\sum_{k\sim J_m}\omega_{ik}^{(m)}h_{ik}+\sum_{k\sim J_i\cup D_t}p_{ik}h_{ik}}{\sum_{m\sim \hat{C}_{t-1}\backslash i}\frac{\rho_i^{m}}{|J_m|}\sum_{k\sim J_m}\omega_{ik}^{(m)}+\sum_{k\sim J_i\cup D_t}p_{ik}}\nonumber\\
&=\frac{\sum_{m\sim \hat{C}_{t-1}\backslash i}\rho_i^{m}(\Bar{z}_i^{(m)}-\mathbb{E}_{g}\Bar{z}_i^{(m)})+\Tilde{\mu}_i}{\sum_{m\sim \hat{C}_{t-1}\backslash i}\rho_i^{m}(\Bar{\omega}_i^{(m)}-1)+1}\equiv {\hat{\hat{\mu}}_i},
\label{apx: neg_part}
\end{align}
where $\Bar{z}_i^{(m)}$ and $\Bar{\omega}_i^{(m)}$ are corresponding averages and $\hat{\hat{\mu}}_i$ is an estimate of $\Tilde{\mu}_i$. By the delta method, we can calculate the expectation of the gradient estimator:
\begin{align}
\mathbb{E}_{g}\hat{\mu}_{ij}\approx &\mu_{ij}- \sum_{m\sim \hat{C}_{t-1}\backslash i}\frac{1}{|J_m|}(\rho_i^{(m)})^2 cov_{g^{(m)}}(\omega_{i}^{(m)},z_{i}^{(m)})\nonumber\displaybreak[3]\\
    &+\sum_{m\sim \hat{C}_{t-1}\backslash i}\frac{1}{|J_m|}(\rho_i^{(m)})^2\Tilde{\mu}_{i}var_{g^{(m)}}\omega_{i}^{(m)}, 
\end{align}
Similarly, we can use the standard delta method to estimate the variance of the gradient estimator:
\begin{align}
    var_{g}\hat{\mu}_{ij} \approx& \sum_{m\sim \hat{C}_{t-1}\backslash i}\frac{(\rho_i^{(m)})^2}{|J_m|}\bigg[{\Tilde{\mu}_{i}}^2var_{g^{(m)}}(\omega_{i}^{(m)})\nonumber\\
    &+var_{g^{(m)}}(z_{i}^{(m)})-2{\Tilde{\mu}_{i}}cov_{g^{(m)}}(\omega_{i}^{(m)},z_{i}^{(m)})\bigg].
\end{align}
With the above approximation of the expectation and variance of gradient, we use the bias-variance decomposition to indicate that the bias between the estimator and the ground truth is not necessarily small and the mean square error mainly depends on the value of the estimation variance:
\begin{align}
\small
    & MSE(\hat{\mu}_{ij}) = [\mathbb{E}_{g}(\hat{\mu}_{ij})-\mu_{ij}]^2\nonumber\\ 
    &\hspace{0.8in}+ \sum_{m\sim \hat{C}_{t-1}\backslash i}var_{g^{(m)}}(\hat{\mu}_{ij}^{(m)})\nonumber \\
    &\hspace{0.65in}\approx \sum_{m\sim \hat{C}_{t-1}\backslash i}var_{g^{(m)}}(\hat{\mu}_{ij}^{(m)}) + O(|J_m|^{-2}).
\end{align}
To further analyze the mean square error, we need to reformulate $MSE(\hat{\mu}_{ij})$ as a function of $var(\omega)$. Denoting $H^{(m)}=h^{(m)}$, we can have $z^{(m)}=\omega^{(m)}H^{(m)}$, $\mu=\mathbb{E}_{g^{(m)}}(\omega^{(m)}H^{(m)})$ and finally estimate the variance of the estimator $\hat{\mu}_i$ using the delta method as follows:
\begin{align}
    var_{g^{(m)}}(\hat{\mu}_{ij})& \approx  \sum_{m\sim \hat{C}_{t-1}\backslash i} \bigg\{\frac{(\rho_i^{(m)})^2}{|J_m|}var_{\pi}(H_i^{(m)}) \nonumber\displaybreak[3] \\
    & \quad\times \{1+var_{g^{(m)}}(\omega_i^{(m)})\} \bigg\}.\\
    &=  \sum_{m\sim \hat{C}_{t-1}\backslash i} \bigg\{\frac{1}{|J_m|}var_{\pi}(H_i^{(m)}) \nonumber\displaybreak[3] \\
    & \quad\times \{(\rho_i^{(m)})^2+var_{g^{(m)}}(\frac{p_i^{(m)}}{g^{(m)}})\} \bigg\}.
\end{align}
By noting that $\rho_i^{(m)}\leq 1$ the variation of $H$ can be bounded as:
\begin{equation}
 var_\pi(H)=\mathbf{E}_{\pi}H^2-(\mathbf{E}_{\pi}H)^2 \leq \max|H^2| = M^2,
\end{equation}
we can finally obtain a bound for the estimated error:
\if0
\begin{align}
    \mathrm{MSE}(\hat{\mu}_{ij})\leq & M^2B^2 \nonumber\displaybreak[3] \\
    & \times \sum_{m\sim C_{t-1}\backslash i}\frac{1}{|J_m|}\big(1+\mathrm{var}_{g^{(m)}}\frac{p_i^{(m)}}{g^{(m)}}\big).
\end{align}
\fi
\begin{equation}
    \mathrm{MSE}(\hat{\mu}_{ij})\leq \sum_{m\sim C_{t-1}\backslash i}\frac{M^2}{|J_m|}\big(1+\mathrm{var}_{g^{(m)}}\frac{p_i^{(m)}}{g^{(m)}}\big).
\end{equation}
\end{proof}

\begin{table*}[!t]
    \centering  
    \begin{tabular}{ccc}
	\hline
\textbf{Method}&\textbf{Buffer}&\textbf{hyper-parameters}\\
\hline
\multicolumn{3}{c}{\textbf{Seq-Cifar-10}}\\
\hline
\multirow{2}{*}{CCLIS}&200&\multirow{2}{*}{$\eta:1.0,\eta_{proto}:0.01, bsz:512, \tau:0.5,\kappa_{past}:0.1,\kappa_{cur}:0.2,E_{t=0}:500, E_{t\geq 0}:100$}\\
&500&\\
\hline
\multicolumn{3}{c}{\textbf{Seq-Cifar-100}}\\
\hline
\multirow{2}{*}{Co2L}&200&\multirow{2}{*}{$\eta:0.5, bsz:512, \tau:0.5,\kappa_{past}:0.01,\kappa_{cur}:0.2,E_{t=0}:500, E_{t\geq 0}:100$}\\
&500&\\
\multirow{2}{*}{CCLIS}&200&\multirow{2}{*}{$\eta:1.0,\eta_{proto}:0.01, bsz:512, \tau:0.5,\kappa_{past}:0.1,\kappa_{cur}:0.2,E_{t=0}:500, E_{t\geq 0}:100$}\\
&500&\\
\hline
\multicolumn{3}{c}{\textbf{Seq-Tiny-Imagenet}}\\
\hline
\multirow{2}{*}{CCLIS}&200&\multirow{2}{*}{$\eta:1.0,\eta_{proto}:0.01, bsz:512, \tau:0.5,\kappa_{past}:0.1,\kappa_{cur}:0.2,E_{t=0}:500, E_{t\geq 0}:50$}\\
&500&\\
\hline
\end{tabular}
    \caption{hyper-parameters selected in our experiments.}
\label{tab:hyper_selection}
\end{table*}

\subsection{Criterion for Proposal Distributions Selection}
Here we aim at providing a proposal distribution selection method to minimize the mean of the estimated error over all the target distributions, which can be bounded as the function of the mean of the variance:
\begin{align}
    & \mathbb{E}_i  MSE(\hat{\mu}_i) \leq \sum_{i=1}^{n_t}\sum_{m\sim C_{t-1}\backslash i}\frac{M^2B^2}{|J_m|n_t}(1+\mathrm{var}_{g^{(m)}}\frac{p_i^{(m)}}{g^{(m)}})\nonumber\displaybreak[3]\\
    &= \sum_{m}\frac{M^2B^2}{|J_m|}\mathbb{E}_i\mathrm{var}_{g^{(m)}}\frac{p_i^{(m)}}{g^{(m)}}+const,
\end{align}
where $\hat{\mu}_i:=\sum_j\hat{\mu}_{ij}$ is the gradient of~\ref{apx: CacheNCE} for specific $i$ and $B$ is the batch size for training. 
For a specific class $m$, we can see that the bound is proportional to the expectation of the variance of the importance weight. However, for stability, we propose to minimize the \emph{KL-divergence} which can measure the gap between proposal distributions and target distributions, rather than minimizing the average variance~\cite{ortiz2013adaptive, su2021variational}. The normalizing function of $p_i$ is noticed to be intractable, as the samples of the new task $D_t$ are invisible when we draw samples from $R_{t-1}$. For this reason, we use $\hat{p}_{ij} = \frac{\exp(s_{ij})}{\sum_{k\sim R_{t-1}}\exp(s_{ik})}$ to approximate $p_i$:
\begin{equation}
    \min\frac{1}{n_t}\sum_{i=1}^{n_t} KL(\hat{p}_{i}^{(m)}||g^{(m)}),\quad s.t.\sum_{k}g_k^{(m)}=1,
    \label{KL}
\end{equation}
where $n_t$ is the number of data distributions of the visible classes of $R_{t-1}$ except for the class $m$.
Noticing that both proposal distributions and target distributions are discrete, we can thus apply the \emph{Lagrange Multiplier Method} to find an appropriate sampling distribution to minimize the \emph{KL-divergence:}
\begin{equation}
    g^{(m)} = \arg\min_g \frac{1}{n_t}\sum_{i=1}^{n_t} KL(\hat{p}_{i}^{(m)}||g)+\alpha(\sum_{k}g_k-1),
\end{equation}
where $\alpha$ is a Lagrange multiplier. We can finally obtain the sampling distribution by solving the Lagrange function:
\begin{equation}
    g^{(m)}=\frac{1}{n_t}\sum_{i=1}^{n_t}\hat{p}_{i}^{(m)}.
    \label{eq:score}
\end{equation}

\section{B. Experimental Details}
We implement our model based on the code of Co2L~\cite{cha2021co2l}. 
We follow most of the continual learning settings, model architecture, data preparation, and other training details from the past literature, especially Co2L and GCR~\cite{tiwari2022gcr} when conducting our experiments.

\subsection{Model Architecture}
Similar to previous work, samples are sequentially forwarded to the encoder $f_\theta$ and the projection layer $g_\phi$ to obtain the responding embeddings, and then enter the prototype layer to compute the contrastive loss with visible prototypes. We follow the backbone settings claimed in Co2L and GCR on all the datasets that Resnet-18 is used as the backbone for representation learning, and a two-layer projection MLP is used to map the output of the backbone into a 128-dimension embedding space, where embeddings are normalized as unit vectors through the normalization layer.

During training, a trainable prototype layer is constructed with a linear layer. The normalized embeddings are forwarded to calculate the score as the dot product of the corresponding prototype and instance embeddings. All the model parameters are initialized at random. During testing, the prototype layer is removed and the rest of the encoder is frozen to retain the representation knowledge learned in the training period. We use a linear layer as the classifier to measure the quality of contrastive representations in image classification.

\subsection{Data Preparation}
In our study under the contrastive continual framework, we have made significant advances over previous work. First, our prototype-based InfoNCE loss eliminates the need to augment each sample into two views, thus reducing memory usage. Second, to prevent the occurrence of batches containing only a single class, which can adversely affect our contrastive algorithm, we draw each mini-batch in equal proportions from both the replay buffer and the current task. Lastly, enhancing our model's robustness requires various augmentation techniques. For this, we have adopted a set of augmentations that are standard in the contrastive continual learning literature.

\begin{table*}[!t]
    \centering
    \small
    \begin{tabular}{cccccccc}
	\hline
\multirow{2}{*}
{\textbf{Buffer}}&\textbf{Dataset}&\multicolumn{2}{c}{\textbf{Seq-Cifar-10}}&\multicolumn{2}{c}{\textbf{Seq-Cifar-100}}&\multicolumn{2}{c}{\textbf{Seq-Tiny-ImageNet}}\\
&\textbf{Scenario}&\textbf{Class-IL}&\textbf{Task-IL}&\textbf{Class-IL}&\textbf{Task-IL}&\textbf{Class-IL}&\textbf{Task-IL}\\
\hline
\multirow{8}{*}{200}
&ER&59.3$\pm$2.48&6.07$\pm$1.09&75.06$\pm$0.63&	27.38$\pm$1.46	&76.53$\pm$0.51&	40.47$\pm$1.54
 \\
&GEM&80.36$\pm$5.25&9.57$\pm$2.05&77.4$\pm$1.09&	29.59$\pm$1.66
&-&- \\ 
&GSS&72.48$\pm$4.45&	8.49$\pm$2.05&	77.62$\pm$0.76&	32.81$\pm$1.75&	76.47$\pm$0.4&	50.75$\pm$1.63
 \\
&iCARL&23.52$\pm$1.27&	25.34$\pm$1.64&	47.2$\pm$1.23&	36.2$\pm$1.85&	\textbf{31.06$\pm$1.91}&	42.47$\pm$2.47
 \\
&DER&35.79$\pm$2.59& 6.08$\pm$0.70&62.72$\pm$2.69&25.98$\pm$1.55 &64.83$\pm$1.48&40.43$\pm$1.05\\
&Co2L&36.35$\pm$1.16& 6.71$\pm$0.35
&67.06$\pm$0.01&37.61$\pm$0.11
 &73.25$\pm$0.21&47.11$\pm$1.04
\\
&GCR&32.75$\pm$2.67& 7.38$\pm$1.02&57.65$\pm$2.48&24.12$\pm$1.17 &65.29$\pm$1.73&40.36$\pm$1.08\\
&\textbf{CCLIS(Ours)}&\textbf{22.59$\pm$0.18}&\textbf{2.08$\pm$0.27
} &\textbf{46.89$\pm$0.59}& \textbf{14.17$\pm$0.20
}&62.21$\pm$0.34&\textbf{33.20$\pm$0.75
}\\
\hline
\multirow{8}{*}{500}
&ER&43.22$\pm$2.1&3.5$\pm$0.53&67.96$\pm$0.78&	17.37$\pm$1.06	&75.21$\pm$0.54&	30.73$\pm$0.62
 \\
&GEM&78.93$\pm$6.53&5.6$\pm$0.96&71.34$\pm$0.78&	20.44$\pm$1.13&-&-
 \\
&GSS&59.18$\pm$4.0&	6.37$\pm$1.55&	74.12$\pm$0.42&	26.57$\pm$1.34&	75.3$\pm$0.26	&45.59$\pm$0.99
 \\
&iCARL&28.2$\pm$2.41&	22.61$\pm$3.97&	40.99$\pm$1.02&	27.9$\pm$1.37	&\textbf{37.3$\pm$1.42}	&39.44$\pm$0.84
 \\
&DER&24.02$\pm$1.63&3.72$\pm$0.55 &49.07$\pm$2.54& 25.98$\pm$1.55&59.95$\pm$2.31&28.21$\pm$0.97\\

    &Co2L&25.33$\pm$0.99&3.41$\pm$0.8
 &51.96$\pm$0.80&26.89$\pm$0.45
 &65.15$\pm$0.26&39.22$\pm$0.69
\\
    &GCR&19.27$\pm$1.48&3.14$\pm$0.36 &\textbf{39.20$\pm$2.84}& 15.07$\pm$1.88&56.40$\pm$1.08&27.88$\pm$1.19\\
    &\textbf{CCLIS(Ours)}&\textbf{18.93$\pm$0.61}&\textbf{1.69$\pm$0.12
} &42.53$\pm$0.64& \textbf{12.68$\pm$1.33
}&50.15$\pm$0.20&\textbf{23.46$\pm$0.93
}\\
\hline
\end{tabular}
    \caption{Average Forgetting compared with all baselines in Continual Learning. }
\label{tab:add_forgetting_result}

\end{table*}

\begin{table*}[htbp]
    \centering
    \resizebox*{!}{0.2\columnwidth}{
    \begin{tabular}{ccccccc}
        \hline  
&\textbf{50}&\textbf{100}&\textbf{200}&\textbf{300}&\textbf{400}&\textbf{500}\\
        \hline
            w/o IS and PRD&41.18$\pm$0.11&48.33$\pm$0.13&56.43$\pm$1.54&59.97$\pm$1.65&66.26$\pm$0.50&68.11$\pm$0.27\\
            w/ IS only&35.43$\pm$1.11&49.18$\pm$1.17&58.37$\pm$1.17&61.14$\pm$1.26&65.59$\pm$0.60&68.88$\pm$0.66\\
            w/ PRD only&\textbf{68.33$\pm$0.04}&70.75$\pm$0.76&73.95$\pm$1.12&74.62$\pm$0.37&77.35$\pm$0.11&78.13$\pm$0.31\\
\textbf{CCLIS(ours)}&67.09$\pm$0.69&\textbf{72.57$\pm$1.60}&\textbf{74.95$\pm$0.61}&\textbf{76.02$\pm$0.12}&\textbf{77.40$\pm$0.28}&\textbf{78.57$\pm$0.25}\\
        \hline
    \end{tabular}}
    \caption{Different buffer sizes on Seq-Cifar-10 with Resnet18. }
    
    \label{tab:buffer_abla}
\end{table*}

\begin{table}[htbp]
    \centering
    \resizebox*{!}{0.3\columnwidth}{
    \begin{tabular}{cccc}
        \hline  
            &&\textbf{100}&\textbf{300}\\
        \hline
            \multirow{2}{*}{Resnet18}&w/o IS+PRD&48.33$\pm$0.13&59.97$\pm$1.65\\
            &ours&\textbf{72.57$\pm$1.60}&\textbf{76.02$\pm$0.12}\\
            \multirow{2}{*}{Resnet34}&w/o IS+PRD&46.43$\pm$2.05&58.65$\pm$1.43\\
            &ours&\textbf{71.88$\pm$0.96}&\textbf{75.83$\pm$0.37}\\
            \multirow{2}{*}{Resnet50}&w/o IS+PRD&45.83$\pm$0.28&57.38$\pm$0.13\\
            &ours&\textbf{73.23$\pm$1.36}&\textbf{75.92$\pm$0.11}\\
        \hline
    \end{tabular}
    }
    \vspace{-0.5em}
    \caption{Different architectures on Seq-Cifar-10.}
    \label{tab:resnet_abla}
\end{table}

\subsection{Hyper-parameter Selection} 
The selection of hyper-parameters is a crucial step in training robust models. The selection criterion was determined for our experiments using test accuracy obtained from a model trained on the validation set. This validation set comprises $10\%$ of the samples drawn from the primary training set. The model was trained for each task over 50 epochs, and the test accuracy was averaged over five independent trials.
The hyper-parameters under consideration were: 

\begin{itemize} 
\item Learning rate ($\eta$) selected from ${0.1, 0.5, 1.0}$ 
\item Prototype learning rate ($\eta_{\text{proto}}$) from ${0.01, 0.1, 0.5}$ 
\item Batch size ($bsz$) from ${256, 512}$ \item Temperature for the Sample-NCE loss ($\tau$) from ${0.1, 0.5, 1.0}$
\item Past temperature ($\kappa_{\text{past}}$) from ${0.1, 0.5, 1.0}$ 
\item Current temperature ($\kappa_{\text{cur}}$) from ${0.1, 0.2}$ used for prototype-instance relation distillation loss 
\item Number of start epochs ($E_{t=0}$) fixed at ${500}$ 
\item Number of epochs ($E_{t\geq 0}$) chosen from ${50, 100}$ 
\end{itemize}

All hyper-parameters selected for our method and baseline comparisons are detailed in Table \ref{tab:hyper_selection}. It is important to note that certain hyper-parameters previously explored in the literature have been omitted for brevity. For our experiments within \emph{CCLIS}, a distill power ($\lambda$) of 0.6 was used, while Co2L used a distill power of 1.0.

\begin{table}[htbp]
    \centering
    \scalebox{0.85}{
    \begin{tabular}{ccccc}
        \hline  
            \multirow{2}{*}&\multicolumn{2}{c}{\small{\textbf{Seq-CIFAR-10}}}&\multicolumn{2}{c}{\small{\textbf{Seq-CIFAR-100}}}\\
            &\small{\textbf{200}}&\small{\textbf{500}}&\small{\textbf{200}}&\small{\textbf{500}}\\
        \hline
            \small{$L_{asym}^{sup}$(Co2L)} &\small{53.57$\pm$1.03}&\small{59.52$\pm$0.69}&\small{25.99$\pm$0.45}&\small{33.61$\pm$0.52}\\
            \small{$L^{sup}$(Ours)}&\small{\textbf{56.43$\pm$1.54}}&\small{\textbf{68.23$\pm$0.28}}&\small{\textbf{27.89$\pm$0.47}}&\small{\textbf{36.91$\pm$0.44}}\\
        \hline
\end{tabular}}
\caption{Performance of our symmetric prototype-based NCE loss($L^{sup}$) \emph{vs} the asymmetric supervised contrastive loss($L_{asym}^{sup}$) proposed by Co2L, both of which are trained on Seq-CIFAR-10 and Seq-CIFAR-100 without self-distill and importance sampling methods under the Class-IL scenario. }
    \label{tab:NCE_ablation}
\end{table}

\subsection{Training Details} 
We employ a linear warm-up for the initial 10 epochs during the representation learning phase. Subsequently, the learning rate decays following the cosine decay schedule as proposed by~\citet{loshchilov2016sgdr}. Training is performed using the Stochastic Gradient Descent (SGD) optimizer, complemented with a momentum of 0.9 and a weight decay of 0.0001 in all trials.

We randomly choose a class from the set of classes in the final period and then uniformly select a sample from the specific class to train the classifier. We train the linear classifier over 100 epochs in the linear evaluation stage. This is done using SGD, which is characterized by a momentum of 0.9. In particular, we omit weight decay during this phase. The learning rate undergoes an exponential decay, with decay rates set at 0.2 for the 60th, 75th, and 90th epochs. We use \{0.5, 0.1, 0.5\} learning rate for the linear classifier in datasets \{Seq-Cifar-10, Seq-Cifar-100, Seq-Tiny-Imagenet\}. It is pertinent to mention that these training specifics are aligned with those of Co2L, ensuring a fair basis for comparison.

Furthermore, to mitigate any influence on the importance sampling technique arising from data augmentation, we compute the score employing static model parameters across all observable samples. This measure ensures that the sample score remains consistent throughout the training process. We perform experiments five times as a result of the diverse data augmentations applied to samples. By averaging the sample scores across these runs, we derive the final importance weight, bolstering the robustness of our algorithms. Subsequently, we sample uniformly from the proposal distribution without repetition and store the resultant data in the replay buffer.

\section{C. Additional Experiments}
\textbf{Average Forgetting.} Here we represent additional results about the average forgetting. We compare our method with all seven baselines and Table~\ref{tab:add_forgetting_result} shows that our method can mitigate Catastrophic Forgetting better than most of the baselines.

\textbf{Model performance with different buffer sizes.} We conduct additional experiments to explore the effectiveness of our algorithm with different buffer sizes. As shown in~\ref{tab:buffer_abla}, 
when the number of cached samples is too low, our method exhibits significant estimation bias, resulting in performance degradation. As the number of cached samples gradually increases, our method demonstrates stable superiority over methods based on random sampling within a certain range. However, as the number of cached samples continues to increase, the advantage of our method gradually diminishes. This suggests that our method is suitable for a certain range of cached sample quantities.

\textbf{Model performance with different architectures.} To investigate the impact of different model architectures on algorithm performance, we compared three different ResNet architectures in the Seq-Cifar-10 dataset. As shown in~\ref{tab:resnet_abla}, our algorithm effectively achieves great improvements in different architectures.

\textbf{Effectiveness of symmetric prototype-based loss.} 
To verify the effectiveness of our prototype-based loss, we 
compare two contrastive losses, including our symmetric loss~(\ref{apx: CacheNCE}) and the asymmetric instance-wise loss proposed by \cite{cha2021co2l}, all of which are carried out in Seq-Cifar-10 in the Class-IL setting without distillation loss. As demonstrated in Table \ref{tab:NCE_ablation}, symmetric prototype-based InfoNCE is better than asymmetric instance-wise InfoNCE, which indicates that our InfoNCE can obtain a high-quality representation space with prototypes. Interestingly, the symmetric instance-wise contrastive loss performs worse than the asymmetric one in continual learning scenarios in the past literature~\cite{cha2021co2l}. With the results demonstrated in Table~\ref{tab:NCE_ablation}, we can believe that prototype-based contrastive loss can preserve knowledge better than instance-wise contrastive loss due to the observation that all prototypes can be preserved while most instances of previous tasks are lost online.

\if0
\begin{table}[!t]
    \centering
    \resizebox*{!}{0.25\columnwidth}{
    \begin{tabular}{ccc}
        \hline  
            &\textbf{100}&\textbf{300}\\
        \hline
            w/o IS and PRD&48.33$\pm$0.13&59.97$\pm$1.65\\
            w/ IS only&49.18$\pm$1.17&61.14$\pm$1.26\\
            w/ PRD only&70.75$\pm$0.76&74.62$\pm$0.37\\
\textbf{CCLIS(ours)}&\textbf{72.57$\pm$1.60}&\textbf{76.02$\pm$0.12}\\
        \hline
    \end{tabular}}
    \caption{Different buffer sizes on Seq-Cifar-10 with Resnet18. }
    \label{tab:buffer_abla}
\end{table}
\fi

\end{document}